\documentclass{article}
\usepackage{amsmath,amsbsy,amsfonts,amssymb,amsthm,dsfont}
\usepackage{natbib}

\setcitestyle{authoryear,round,citesep={;},aysep={,},yysep={;}}
\usepackage[colorlinks = true, linkcolor = blue, citecolor = cyan]{hyperref}

\renewcommand{\cite}[1]{\citep{#1}}

\usepackage{fullpage}
\usepackage{graphicx}
\usepackage{algorithm,algorithmic,mathtools}
\usepackage{color,cases}
\usepackage{tikz}
\usetikzlibrary{calc,shapes}
\usepackage{subfigure}
\usepackage{float}
\usepackage{hyperref}
\usepackage[many]{tcolorbox}   

\newtheorem{theorem}{Theorem}

\DeclareMathOperator*\E{\mathbb{E}}

\newtheorem{cor}{Corollary}[section]
\newtheorem{lemma}{Lemma}[section]

\def\shownotes{1}  %set 1 to show author notes
\ifnum\shownotes=0
\newcommand{\authnote}[2]{{$\ll$\textsf{\footnotesize #1 notes: #2}$\gg$}}
\else
\newcommand{\authnote}[2]{}
\fi
\newcommand{\Anote}[1]{{\color{blue}\authnote{Andrej}{{#1}}}}

\title{Theoretical Limitations of Encoder-Decoder GAN architectures}
%\author{Author List Anonymized}
\author{Sanjeev Arora\thanks{Princeton University, Computer Science Department \texttt{arora@cs.princeton.edu}}, Andrej Risteski\thanks{Massachusets Institute of Technology, Applied Mathematics and IDSS \texttt{risteski@mit.edu}}, Yi Zhang\thanks{Princeton University, Computer Science Department \texttt{y.zhang@cs.princeton.edu}}}
%Princeton University\\
%Princeton, NJ 08544, USA \\
%\texttt{\{arora, risteski, y.zhang\}@cs.princeton.edu} \\
%}
\begin{document}
\maketitle
\begin{abstract}
Encoder-decoder GANs architectures (e.g., BiGAN and ALI) seek to add an \textquotedblleft inference\textquotedblright\ mechanism to the GANs setup, consisting of a small encoder deep net that maps data-points to their succinct encodings.  The intuition is that being forced to train an encoder alongside the usual generator forces the system to learn meaningful mappings from the code to the data-point and vice-versa, which should improve the learning of the target distribution and ameliorate mode-collapse.  It should also yield meaningful codes that are useful as  features for downstream tasks. 
The current paper shows rigorously that even on real-life distributions of images, the encode-decoder GAN training objectives (a) cannot prevent mode collapse; i.e. the objective can be near-optimal even when the generated distribution has low and finite support (b) cannot prevent learning meaningless codes for data -- essentially white noise.

Thus if encoder-decoder GANs do indeed work then 
it must be due to reasons as yet not understood, since the training objective can be low even for meaningless solutions.

Though the result statement may see reminiscent in spirit to the  ICML'17 paper~\cite{arora2017generalization}, the proof is novel.
\iffalse 
In what sense can GANs (Generative Adversarial Nets) actually be said to ``work''? The foundational paper of~\cite{goodfellow2014generative} suggested they in fact learn the target distribution, if they were given \textquotedblleft sufficiently large\textquotedblright\ deep nets, sample size, and computation time. A recent theoretical analysis in~\cite{arora2017generalization} raised doubts whether the same continues to hold when the discriminator has bounded size: it showed that the training objective can be close to optimum even if the generated distribution has very low support ---in other words, the training objective is unable to prevent {\em mode collapse}. Furthermore, though these worries may seem theoretical only (i.e. one might hope the training process for GANs somehow avoids these bad solutions), 
 the recent work \cite{arora2017gans} proposed a quantitative ``birthday paradox'' test, which suggests that even with state-of-the-art GAN architectures, this problem seems to be real. \fi

\end{abstract}

\section{Introduction}
%auto-ignore
%!TEX root=main.tex

Generative Adversarial Networks (GANs) have emerged as formidable new method in unsupervised learning, often learning to generate images that are visually more appealing than the ones generated using other unsupervised learning methods (e.g. variational autoencoders, generative stochastic networks, deep Boltzmann machines etc.)  
However, in contrast to many previous unsupervised learning methods, GANs do not provide an estimate of a measure of distributional fit (e.g. likelihood calculation on heldout data). Then there is no obvious guarantee of generalization at the end, and the persistent fear has been of mode-collapse (the simplest being that the generator network memorizes training examples). Additionally, standard GAN frameworks may not provide any meaningful features (or latent representations) for downstream tasks---which  is often the goal of unsupervised learning. 

A theoretical study of GANs was initiated in the seminal work of \citep{goodfellow2014generative}, which proved that when  sample sizes, generator sizes, and discriminator sizes are all unbounded (i.e., infinite) then the generator converges to the true data distribution. But a recent theoretical analysis of GANs with finite training samples and finite discriminator size \cite{arora2017generalization} reached a different conclusion: it was proved that the training objective can be close to optimal while at the same time the generator is far from having actually learnt the distribution. Concretely, \cite{arora2017generalization} show that if the discriminator has size bounded by $p$, the training objective can be $\epsilon$ close to optimal even though the output distribution is supported on only $O(p\log p/\epsilon^2)$ images. By contrast one imagines that the target distribution usually must have very large support : the set of all possible images of human faces (a frequent setting in GANs work) should effectively involve all combinations  of hair color/style, facial features, complexion, expression, pose, lighting, race, etc., and thus the possible set of images of faces approaches infinity.

Of course, such a theoretical analysis does not in principle preclude the possibility that the training process of GAN's somehow avoids such low-support solutions -- similarly to how SGD seems to avoid bad local optima in the (supervised) training of feedforward neural networks. Clearly, the issue needed to be studied empirically as well. A recent paper~\cite{arora2017gans} proposed a \emph{birthday paradox} test to probe the diversity of trained GANs.  The birthday paradox states that if we are sampling uniformly at random from a set of support $N$, we will start seeing collisions (i.e. repeated samples of the same element) after sampling about $\sqrt{N}$ elements. This is adapted to the continuous regime by sampling $s$ samples from the generator, finding the 20 most similar images using some automated measure of similarity, and visually inspecting these 20 images for duplicates. (Note, the last step ensures ``no false positives'': in order to find a ``duplicate'' a visual inspection of it must have occured.) If we find a duplicate, this suggest the support of the distribution cannot be more than about $s^2$. The results of the birthday paradox test in \cite{arora2017gans} suggest the low-support solutions aren't a merely theoretical issue, but do actually occur even in practically trained GANs, which suffer from mode-collapse to various degrees.

Encoder-decoder frameworks like BiGAN~\citep{donahue2017adversarial} and Adversarially Learned Inference or ALI \citep{dumoulin2017adversarially} were recently proposed  towards fixing both the issue of mode collapse, and the lack of features in the standard GAN setup. Inspired by autoencoder models, they force the generative model to learn an inference mechanism as well as a generative mechanism.   The hope is that the  encoding mechanism \textquotedblleft inverts\textquotedblright\ the generator and thus forces the generator to learn meaningful featurizations of date. It has been suggested that the constraint of learning ``meaningful features'' will also help the mode collapse problem: \citep{dumoulin2017adversarially} report experiments on 2-dimensional mixtures of Gaussians suggesting this is indeed the case.
More promisingly, the theoretical result of \cite{arora2017generalization} also doesn't seem to extend to encoder-decoder architectures. %
 % only to standard GAN training objectives (including JS and Wasserstein), but inherently fails for encoder-decoder  setups 
  Thus it was an open problem whether  encoder-decoder GANs suffer from the same theoretical limitations as standard GANs. 
  (We note that the above-mentioned emprical study~\cite{arora2017gans} did report that ALI training also suffers from mode collapse, although it seems slightly better than other GANs setups in this regard.) 

The current paper provides theoretical analysis showing that encoder-decoder training objectives cannot avoid mode collapse even for very realistic target distributions (basically, real-life images) and they cannot enforce learning of meaningful codes/features as well. In fact, a close-to-optimum solution to the encoder-decoder optimization can be achieved by an inference mechanism that essentially extracts white noise from the images, and
where the generator produces a distribution of finite support whose size is moderate (sub-quadratic in the discriminator size).
The proof is novel, as explained below.

\section{Preliminaries}
We recall the Adversarial Feature Learning (BiGAN) setup from \cite{donahue2017adversarial}. For concreteness we assume the setup is being trained on the distribution of real-life images.

The ``generative'' player consists of two parts: a generator $G$ and an encoder $E$. The generator takes as input a latent variable $z$ and produces a sample $G(z)$ that is its attempt to output a realistic image. The encoder takes as input an actual image $x$ and produces $E(x)$, which is a guess for the latent variable that can generate $x$. 

The underlying assumption is that images come from an unknown manifold whose dimension is much lower than the number of pixels. Latent variables correspond to image representations on this manifold and Encoder (resp., generator) map from images to manifold points (resp., manifold points to images).  Then there exists a distribution
$p(z,x)$,  where $p$ is the joint distribution of the latent variables and data. The goal of the training is to yield $E, G$ such that $G(z)$ is distributed as $p(x|z)$, the true generator distribution; and $E(x)$ is distributed as $p(z|x)$, the true encoder distribution.  The hope is that the trained encoder-decoder pairs are such that both $(z, G(z))$ and $(E(x), x)$ are equal to
$p(z, x)$. 

In the older encoder-decoder frameworks, $E$ and $G$ would be trained jointly using variational inference. But the GAN setup avoids any explicit probability calculations, and instead uses an adversarial discriminator who can be asked (i.e., trained) to distinguish between two given distributions. % are considered similar if 
Thus the goal of the ``generative'' player is to convince the discriminator that these two distributions $(z, G(z))$ and $(E(x), x)$ are the same, where $z$ is a random seed and $x$ is a random image. The discriminator is being trained to distinguish between them. 

Using usual min-max formalism for adversarial training, the BiGAN objective is written as: 
\begin{equation} \min_{G, E} \max_{D} |\E_{x \sim \hat{\mu}} \phi(D(x, E(x))) - \E_{z \sim \hat{\nu}} \phi(D(G(z), z))|
\label{eqn:ali}
\end{equation} 
where $\hat{\mu}$ is the empirical distribution over data samples $x$; $\hat{\nu}$ is a distribution over random ``seeds'' for the latent variables: typically sampled from a simple distribution like a standard Gaussian; and $\phi$ is a concave ``measuring'' function. (The standard choice is $\log$, though other options have been proposed in the literature.) 
For our purposes, we will assume that $\phi$ outputs values in the range $[-\Delta, \Delta], \Delta \geq 1$, and is $L_{\phi}$-Lipschitz.

As mentioned, this objective leads to the target distribution being learnt, given enough capacity in the nets, samples, and training time. But the earlier analysis of~\cite{arora2017generalization} showed that finite-capacity discriminators behave very differently from infinite capacity discriminators, in that they cannot prevent the learnt distribution from exhibiting mode-collapse. But their proof cannot handle the encoder-decoder framework, and the obstacle is nontrivial. Their argument %of~\cite{arora2017generalization} seems unable to apply to this setting.  %In \cite{arora2017generalization}, by 
%It is 
a simple concentration/epsilon-net argument showing that the discriminator of capacity $p$ cannot distinguish between  a generator that samples from $\mu$ versus one that memorizes a subset of $\frac{p \log p}{\epsilon^2}$ random images in $\mu$ and outputs one randomly from this subset. By contrast, in the current setting we need to say what happens with the encoder. A big obstacle is that fairness requires that the encoder net should be smaller than the discriminator, so that discriminator could (in principle, if needed) learn to compute $E$ by itself. Thus in particular the proof must end up describing an explicit small encoder, which seems very difficult. (No such explicit description is known, and generative models are only an approximation.) This difficulty is cleverly circumvented in the argument below by making the encoder map images to random noise extracted from the image.

\section{Limitations of Encoder-Decoder GAN architectures}
\label{s:ALI}
For ease of exposition we will refer to the data distribution $\mu$ as the {\em image distribution}. The proof becomes more elegant if we assume that $\mu$ consists of images that have been noised ---concretely, think of replacing every $100$th pixel by Gaussian noise. %(Formally speaking, total number of pixels that have to be noised is the dimension of the latent space, which is typically extremely small compared to the number of pixels). 
Such noised  images would of course look fine to our eyes, and we would expect the learnt encoder/decoder to not be affected by this noise. For concreteness, we will take the seed/code distribution $\nu$ to be a spherical zero-mean Gaussian (in an arbitrary dimension and with an arbitrary variance). \footnote{
The proof can be extended to non-noised inputs, by assuming that natural images have an innate stochasticity that can be extracted
by a small net to get a few statistically random bits. We chose not to write it that way because it requires making a novel assumption about images. The proof also extends to more general code distributions than Gaussian.} 

Furthermore, we will assume that $\mbox{Domain}(\mu) = \mathbb{R}^d$, $\mbox{Domain}(\nu) = \mathbb{R}^{\tilde{d}}$ with $\tilde{d} < d$ (we think of $\tilde{d} \ll d$, which is certainly the case in practice). As in~\cite{arora2017generalization} we assume that discriminators are $L$-lipschitz with respect to their trainable parameters, and the support size of the generator's distribution will depend upon this $L$ and the capacity $p$ (= number of parameters) of the discriminator. 
%The main theorem we show is as follows: 

\begin{theorem} [Main]  There exists a generator $G$ of support $\frac{p \Delta^2 \log^2 (p \Delta L L_{\phi}/\epsilon)}{\epsilon^2}$ and an encoder $E$ with at most $\tilde{d}$ non-zero weights, s.t. for all discriminators $D$ that are $L$-Lipschitz and have capacity less than $p$, it holds that \\ 
$$|\E_{x \sim \mu} \phi(D(x, E(x))) - \E_{z \sim \nu} \phi(D(G(z), z))| \leq \epsilon $$
%and \\
%(ii) $|\E_{x \sim \hat{\mu}} \phi(D(x, G(x))) - \E_{x \sim \mu} \phi(D(x, G(x))) | \leq \epsilon,  |\E_{x \sim N(0,I)} \phi(D(E(z), z)) - \E_{z \sim N(0,I)} \phi(D(E(z), z)) | $ \\ 

\label{t:main} 
\end{theorem}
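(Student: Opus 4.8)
The plan is to write down $E$ and $G$ explicitly and then reduce the indistinguishability claim to the mode-collapse argument of \cite{arora2017generalization}, with one new idea that neutralizes the encoder. Split every image as $x = (x_{\bar S}, x_S)$, where $S$ is a fixed set of $\tilde d$ of the noised pixels; by the noising assumption $x_S$ is a spherical Gaussian independent of $x_{\bar S}$, and we take the code distribution $\nu$ to have exactly that variance, so $x_S \sim \nu$. Let the encoder read off these coordinates, $E(x) := x_S$: a single linear layer with one nonzero weight per output, hence $\tilde d$ nonzero weights total (in particular, small enough that $D$ could compute it). For the generator, draw $m := C p \Delta^2 \log^2(p \Delta L L_\phi/\epsilon)/\epsilon^2$ i.i.d. images $x^{(1)},\dots,x^{(m)} \sim \mu$, fix a partition $\R^{\tilde d} = R_1 \sqcup \cdots \sqcup R_m$ with $\nu(R_i) = 1/m$, and set $G(z) := (x^{(i)}_{\bar S},\, z)$ for $z \in R_i$ --- i.e. $G$ outputs the ``content'' of one of the $m$ memorized images but with its noise block overwritten by the seed $z$ itself. (There is no capacity constraint on $G$ in the theorem, so this piecewise map is admissible; it can also be mollified at negligible cost.) Then $E(G(z)) = z$, and the generated distribution is supported on the $m$ images $x^{(1)},\dots,x^{(m)}$ up to the values of their $\tilde d$ noise pixels, which are imperceptible --- the asserted mode collapse.

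The reason for replanting $z$ verbatim is that it renders $E$ useless to the discriminator: on the real side the input pair is $(x, x_S)$ and on the generated side it is $(G(z), z)$, and in \emph{both} the second coordinate is exactly the $S$-block of the first, so knowing $E$ explicitly gives $D$ no leverage. What is left is a pure mode-collapse question --- on the real side $x_{\bar S}$ is a fresh draw from the $\bar S$-marginal $\rho$ of $\mu$, independent of the Gaussian block, while on the generated side $x_{\bar S}$ takes only $m$ memorized values. Fix then an $L$-Lipschitz $D$ of capacity $p$ and abbreviate $\Phi_D(y,g) := \phi(D((y,g),g)) \in [-\Delta,\Delta]$. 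The real term is $\E_{y\sim\rho,\,g\sim\nu}\,\Phi_D(y,g)$ and the generated term is $\sum_{i=1}^m \int_{R_i} \Phi_D(x^{(i)}_{\bar S},\, g)\, d\nu(g)$. Averaging over the $m$ i.i.d. draws and using independence of $x_{\bar S}$ and $x_S$ under $\mu$, the generated term is an \emph{unbiased} estimator of the real term; and the $i$-th draw affects it only through its own summand, which lies in an interval of length $2\Delta/m$. McDiarmid's inequality thus gives, for this fixed $D$, $\Pr_{\{x^{(i)}\}}\big[\,|\mathrm{real} - \mathrm{generated}| > \delta\,\big] \le 2\exp\!\big(-m\delta^2/2\Delta^2\big)$.

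Finally, promote this to hold for all $D$ at once by a standard covering argument: with $\le p$ parameters, $L$-Lipschitz in them (assume them confined to a ball of polynomial radius) and $\phi$ being $L_\phi$-Lipschitz, a $\Theta(\epsilon/LL_\phi)$-net of the parameter space has cardinality $\exp\!\big(O(p\log(p\Delta L L_\phi/\epsilon))\big)$ and perturbs the objective by at most $\epsilon/2$; a union bound over the net with $\delta = \epsilon/2$ and the stated $m$ drives the failure probability below $1$, so some (indeed a random) choice of $x^{(1)},\dots,x^{(m)}$ works uniformly. The one genuinely new ingredient relative to \cite{arora2017generalization} is the pair of structural observations above --- extracting a Gaussian noise block yields a provably tiny encoder, and replanting the code verbatim makes that encoder transparent, collapsing the encoder-decoder objective back onto the vanilla GAN objective on the content coordinates. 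The point needing the most care is exactly this reduction (verifying the discriminator gains nothing from $E$ being explicit), together with the routine bookkeeping that keeps the McDiarmid increments at $O(\Delta/m)$ so that $m = \tilde O(p\Delta^2/\epsilon^2)$ suffices.
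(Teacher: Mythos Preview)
Your proposal is correct and follows the paper's approach: the same noise-extracting encoder, the same block-partition generator that replants $z$ into the noise coordinates of a memorized image, and the same $\epsilon$-net/union-bound over discriminators. Your concentration step is in fact more direct than the paper's---you apply McDiarmid straight to $\E_{z\sim\nu}\phi(D(G(z),z))=\sum_{i}\int_{R_i}\Phi_D(x^{(i)}_{\bar S},g)\,d\nu(g)$ as a function of the $m$ independent draws $x^{(i)}$ (bounded differences $2\Delta/m$), whereas the paper detours through an auxiliary distribution $\mathcal{T}_{nc}$ over non-colliding seed sets, applies McDiarmid jointly in $(x^*_i,z_i)$, and then uses Markov to pass back to a statement about $G$ alone.
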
 

%{\sc why did this switch to $C(x)$ from $E(x)$.}

The interpretation of the above theorem is as stated before: the encoder $E$ has very small complexity (we will subsequently specify it precisely and show it simply extracts noise from the input $x$); the generator $G$ is a small-support distribution (so presumably far from the true data distribution). Nevertheless, the value of the BiGAN objective is small.

\noindent{\em The precise noise model:}  Denoting by $\tilde{\mu}$ the distribution of unnnoised images, and $\nu$ the distribution of seeds/codes,  we define the distribution of noised images $\mu$ as the following distribution: to produce a sample in $\mu$  take a sample $\tilde{x}$ from  $\tilde{\mu}$ and $z$ from $\nu$ independently and output $x = \tilde{x} \circledast z$, which is defined as   
 \[
    x_i = \begin{cases}
        z_{\frac{i}{\lfloor \frac{d}{\tilde{d}}\rfloor}}, & \text{if } i \equiv 0 (\mbox{ mod } \lfloor \frac{d}{\tilde{d}} \rfloor) \\
        \tilde{x}_{i}, & \text{ otherwise }
        \end{cases}
  \]  
In other words,  set every $\lfloor \frac{d}{\tilde{d}} \rfloor$-th to one of the coordinates of $z$. In practical settings, $\tilde{d} \ll d$ is usually chosen, so noising $\tilde{d}$ coordinates out of all $d$ will have no visually noticeable effect. \footnote{Note the result itself doesn't require constraints on $d,\tilde{d}$ beyond $\tilde{d} < d$ -- the point we are stressing is merely that the noise model makes sense for practical choices of $\tilde{d}, d$.} %We refer to this new distribution $\mu$ as $\tilde{\mu} \circledast  \nu$.
%, where the operation $\circledast$ is defined as follows:  (The frequent argument is that small latent variable dimensions force a stronger ``bottleneck'' and thus extract more relevant information about the data distribution). 

\subsection{Proof Sketch, Theorem~\ref{t:main}}
(A full proof appears in the appendix.)
The main idea is to show the existence of the generator/encoder pair via a probabilistic construction that is shown to succeed with high probability.  

\begin{itemize}
\item {\em Encoder $E$}: The encoder just extracts the noise from the noised image (by selecting the relevant $\tilde{d}$ coordinates). Namely,
$E(\tilde{x} \circledast z) =z$. (So the {\em code} is just gaussian noise and has no meaningful content.) It's easy to see this can be captured using a ReLU network 
with $\tilde{d}$ weights: we can simply connect the $i$-th output to the ($i \lfloor \frac{d}{\tilde{d}}\rfloor$)-th input using an edge of weight 1.  

\item {\em Generator $G$:} This is designed to produce a distribution of support size $m := \frac{p \Delta^2 \log^2 (p \Delta L L_{\phi}/\epsilon)}{\epsilon^2}$.  
We first define a partition of $\mbox{Domain}(\nu) = \mathbb{R}^{\tilde{d}}$ into $m$ equal-measure blocks under $\nu$. %and $\mathcal{P}_{\mu}$ be an arbitrary partition of $\mu$ into $M$ equal-measure cells under $\mu$.
Next, we sample $m$ samples $x^*_1, x^*_2, \dots, x^*_m$ from the image distribution. Finally, for a sample $z$, we define $\mbox{ind}(z)$ to be the index of the block in the partition in which $z$ lies, and define the generator as $G(z) = x^*_{\mbox{ind}(z)} \circledast z $. Since the set of samples $x^*_i: i \in [m]$ is random, this specifies a \emph{distribution} over generators. We prove that with high probability, one of these generators satisfies the statement of Theorem~\ref{t:main}. Moreover, we show that such a generator can be easily implemented using a ReLU network of complexity $O(md)$ in the full version. %Theorem~\ref{t:representen}. 

\end{itemize}

The basic intuition of the proof is as follows. We will call a set $T$ of samples from $\nu$ \emph{non-colliding} if no two lie in the same block. Let $\mathcal{T}_{nc}$ be the distribution over non-colliding sets $\{z_1, z_2, \dots, z_m\}$, s.t. each $z_i$ is sampled independently from the conditional distribution of $\nu$ inside the i-th block of the partition. 
%uniform distribution over non-colliding sets of size $m$ from $\nu$.

First, we notice that under the distribution for $G$ we defined, it holds that 
$$\E_{x \sim \mu} \phi(D(x, E(x)))  = \E_{x \sim \tilde{\mu}, z \sim \nu} \phi(D(x \circledast z, z))  = \E_{G} \E_{z \sim \nu} \phi(G(z), z)$$ 
In other words, the ``expected'' encoder correctly matches the expectation of $\phi(D(x, E(x)))$, so that the discriminator is fooled. We want to show that  
$\E_{G} \E_{z \sim \nu} \phi(G(z), z)$ concentrates enough around this expectation, as a function of the randomness in $G$, so that we can say with high probability over the choice of $G$, $|\E_{x \sim \mu} \phi(D(x, E(x))) - \E_{z \sim \nu} \phi(G(z), z)|$ is small.  
We handle the concentration argument in two steps: 

%\begin{itemize} 
First, we note that we can calculate the expectation of $\phi(D(G(z), z))$ when $z \sim \nu$ by calculating the empirical expectation over $m$-sized non-colliding sets $T$ sampled according to $\mathcal{T}_{nc}$. Namely, as we show in the full version. %Lemma \ref{l:approximategood}:  
$$ \E_{z \sim \nu} \phi(D(G(z), z)) = \E_{T \sim \mathcal{T}_{nc}} \E_{z \sim T} \phi(D(G(z), z)) $$
This follows easily from the fact that all blocks in the partition have equal measure under $\nu$.

Thus, we have reduced our task to arguing about the concentration of $\E_{T \sim \mathcal{T}_{nc}} \E_{z \sim T} \phi(D(G(z), z))$ (viewed as a random variable in $G$).  
Towards this, we consider the random variable $\E_{z \sim T} \phi(D(G(z), z))$ as a function of the randomness in $G$ and $T$ both. Since $T$ is a non-colliding set of samples, we can write 
$$\E_{z \sim T} \phi(D(G(z), z)) = f(x^*_i, z_i, i \in [m])$$
for some function $f$, where the random variables $x^*_i$, $z_i$ are all mutually independent -- thus use McDiarmid's inequality to argue about the concentration of $f$ in terms of both $T$ and $G$. 

From this, we can use Markov's inequality to argue that all but an exponentially small (in $p$) fraction of encoders $G$ satisfy that: for all but an exponentially small (in $p$) fraction of non-colliding sets $T$, $|\E_{z \sim T} \phi(D(G(z), z)) - \E_{G} \E_{T \sim \mathcal{T}_{nc}}\E_{z \sim T} \phi(D(G(z), z))|$ is small. Note that this has to hold \emph{for all} discriminators $D$ -- so we need to additionally build an epsilon-net, and union bound over all discriminators, similarly as in \cite{arora2017generalization}. 
From this fact, it's easy to extrapolate that for such $G$, $$|\E_{T \sim \mathcal{T}_{nc}}\E_{z \sim T} \phi(D(G(z), z)) - \E_{G} \E_{T \sim \mathcal{T}_{nc}}\E_{z \sim T} \phi(D(G(z), z))| $$ 
is small, as we want. The details are provided in the full version.%Lemma~\ref{l:generatorconcentration} in the Appendix. 
%\end{itemize}

% The proof of correctness uses a covering number argument. For any fixed discriminator....

%{\sc put in a paragraph here with buzzwords like concentration bound etc. such that an expert should be able to understand.}

%\section{{\sc Rest is in appendix}}

\section{Conclusions}

We have considered the theoretical shortcomings of encoder-decoder GAN architectures, which {\em a priori} seemed promising because they  target feature learning, a potentially simpler task than learning the full underlying distribution. At the same time, it was hoped that forcing the generator to learn \textquotedblleft meaningful\textquotedblright\ encodings of the image should improve distribution learning as well by ameliorating mode collapse. Our work suggests however that the learning objective alone does not guarantee such success; the  objective can be low even though the GAN has learnt  meaningless features which amount to noise. Furthermore, the learnt distribution can exhibit severe mode collapse, similarly as it does for the usual GAN setup.  

This theoretical problem arises from two causes: (a) Use of generative models that are highly expressive (multilayer nets) and which can therefore exhibit behavior unanticipated by the designer. (Obviously, this issue doesn't arise in less expressive models like mixtures of gaussians.) (b) Lack of any explicit probability calculations in the training. (Though of course this design decision  arose after the failure of earlier models that did rely on such explicit calculations.)

Our theoretical analysis points to gaps in current ways of reasoning about GANs, but it leaves open the possibility that encoder-decoder GANs do work in practice, with the training somehow avoiding the bad solutions exhibited here. 
%it must be due to reasons as yet not understood, since good value of the training objective does not predict good qualities of the solution. 
We hope our results will stimulate further theoretical and empirical study.
% hope that this continued investigation into possible modes of failure of various GAN setups will stimulate new ides on solving the open problem of how to change the training objectives so that we can avoid mode collapse.   

%Finally, one  should consider the possibility that  This needs further theoretical and empirical exploration.
 
%\section*{Acknowledgements}
%We thank Rong Ge and Cyril Zhang for helpful discussions and Ian Goodfellow for his comments on the manuscript. This research was supported by the National Science Foundation (NSF), Office of Naval Research (ONR), and the Simons Foundation.
 
\bibliography{draft}
\bibliographystyle{iclr2018_conference}
\newpage
\appendix 
\section{Technical proofs}

We recall the basic notation from the main part: the image distribution will be denoted as $\mu$, and the code/seed distribution as $\nu$, which we assume is a spherical Gaussian. \Anote{Can be removed, only necessary for partition.} For concreteness, we assumed the domain of $\tilde{\mu}$ is $\mathbb{R}^d$ and the domain of $\nu$ is $\mathbb{R}^{\tilde{d}}$ with $\tilde{d} \leq d$. (As we said, we are thinking of $\tilde{d} \ll d$.)

%As described above, the min (generative) player consists of a decoder $G_z: \mathbb{R}^d \to \mathbb{R}^{\tilde{d}}$, an encoder $G_x: \mathbb{R}^{\tilde{d}} \to \mathbb{R}^d$, and the max (discriminator) player is a discriminator $D: \mathbb{R}^d \times \mathbb{R}^{\tilde{d}} \to [0,1]$, which we assume to be 1-Lipschitz, and have $p$ parameters. (We will also refer to this as the ``capacity'' of the discriminator.) $\phi$ is a concave measuring function, same as in \cite{arora2017generalization}. 
We also introduced the quantity $m :=  \frac{p \Delta^2 \log^2 (p \Delta L L_{\phi}/\epsilon)}{\epsilon^2}$.% and $M:= m^2$. %Also, for a vector $v \in \mathbb{S}_{d}$, let's denote by $T(v)$ the set of hyperplanes $\{x: \langle v, x\rangle = b\}$  [NB: $\mathbb{S}_d$ denotes the unit sphere in $d$ dimensions.] 

Before proving Theorem~\ref{t:main}, let's note that the claim can easily be made into a finite-sample version. Namely: 
\begin{cor} [Main, finite sample version] There exists a generator $G$ of support $m$, s.t.
if $\hat{\mu}$ is the uniform distribution over a training set $S$ of size at least $m$, and $\hat{\nu}$ is the uniform distribution over a sample $T$ from $\nu$ of size at least $m$, for all discriminators $D$ that are $L$-Lipschitz and have less than $p$ parameters, with probability $1- \exp(-\Omega(p))$ over the choice of training set $S$,$T$ we have: \ 
$$|\E_{x \sim \hat{\mu}} \phi(D(x, E(x))) - \E_{z \sim \hat{\nu}} \phi(D(G(z), z))| \leq \epsilon $$
%and \\
%(ii) $|\E_{x \sim \hat{\mu}} \phi(D(x, G(x))) - \E_{x \sim \mu} \phi(D(x, G(x))) | \leq \epsilon,  |\E_{x \sim N(0,I)} \phi(D(E(z), z)) - \E_{z \sim N(0,I)} \phi(D(E(z), z)) | $ \\ 

\label{c:main} 
\end{cor}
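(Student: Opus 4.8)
\emph{Proof plan for Corollary~\ref{c:main}.} The plan is to derive the finite-sample statement from Theorem~\ref{t:main} by two routine uniform-convergence (empirical-versus-population) estimates, one on the data side and one on the code side, glued together by the triangle inequality. First I would invoke Theorem~\ref{t:main} with accuracy $\epsilon/3$ in place of $\epsilon$ (this rescales $m$ and the support bound by a constant and an $O(\log)$ factor, which I absorb), taking the $m$ anchor images $x^*_1,\dots,x^*_m$ that define $G$ to be $m$ of the points of the training set $S$: these are i.i.d.\ draws from $\mu$, so the probabilistic construction in the proof of the theorem applies unchanged, and with probability $1-\exp(-\Omega(p))$ over $S$ the resulting $G$ --- together with the fixed noise-extracting encoder $E$ --- satisfies $|\E_{x\sim\mu}\phi(D(x,E(x))) - \E_{z\sim\nu}\phi(D(G(z),z))|\le\epsilon/3$ for every $L$-Lipschitz discriminator $D$ of capacity $<p$.

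Next I would control the data term. The quantity $\E_{x\sim\hat\mu}\phi(D(x,E(x)))$ is an average of $|S|\ge m$ i.i.d.\ random variables, each lying in $[-\Delta,\Delta]$ and --- crucially --- not depending on $G$, so for a fixed $D$ Hoeffding's inequality bounds its deviation from $\E_{x\sim\mu}\phi(D(x,E(x)))$ by $\epsilon/6$ except with probability $2\exp(-\Omega(m\epsilon^2/\Delta^2))$. To make this hold simultaneously for all admissible $D$ I would reuse the same device as in the proof of Theorem~\ref{t:main}: build an $\epsilon$-net over the $p$ Lipschitz parameters of the discriminator class (log-cardinality $O(p\log(p\Delta L L_\phi/\epsilon))$), transfer the estimate off the net using the Lipschitzness of $\phi\circ D$, and union-bound. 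The choice $m=\Theta\!\left(p\Delta^2\log^2(p\Delta L L_\phi/\epsilon)/\epsilon^2\right)$ is exactly what makes the Hoeffding exponent $\Omega(p\log^2(\cdots))$ dominate the $O(p\log(\cdots))$ coming from the net, so the net failure probability is still $\exp(-\Omega(p))$; the upshot is $\sup_D |\E_{x\sim\hat\mu}\phi(D(x,E(x))) - \E_{x\sim\mu}\phi(D(x,E(x)))|\le\epsilon/3$ with probability $1-\exp(-\Omega(p))$ over $S$.

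The code term is symmetric. Since $G$ is a deterministic function of $S$ alone, it is independent of the fresh sample $T\sim\nu^{|T|}$ defining $\hat\nu$, so conditionally on $G$ the quantity $\E_{z\sim\hat\nu}\phi(D(G(z),z))$ is again an average of $|T|\ge m$ i.i.d.\ variables in $[-\Delta,\Delta]$, and the identical net-plus-Hoeffding argument gives $\sup_D|\E_{z\sim\hat\nu}\phi(D(G(z),z)) - \E_{z\sim\nu}\phi(D(G(z),z))|\le\epsilon/3$ with probability $1-\exp(-\Omega(p))$ over $T$. A union bound over the three $\exp(-\Omega(p))$-failure events and the triangle inequality $|\E_{x\sim\hat\mu}\phi(D(x,E(x))) - \E_{z\sim\hat\nu}\phi(D(G(z),z))| \le \epsilon/3+\epsilon/3+\epsilon/3 = \epsilon$ then finish the argument.

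There is essentially no new idea here beyond Theorem~\ref{t:main}, so I expect no serious obstacle; the only things needing care are the bookkeeping of the dependency structure --- $G$ depends on $S$, so ``$G$ is good'' and ``$\hat\mu$ is uniformly close to $\mu$'' must both be handled as events over $S$, while $T$ must be kept independent of $G$ --- and the routine check that the stated $m$ is large enough that, after paying the $e^{O(p\log(\cdots))}$ cost of the discriminator net, the per-$D$ Hoeffding bound still leaves failure probability $e^{-\Omega(p)}$. Both are already implicit in the proof of the main theorem.
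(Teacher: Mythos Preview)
Your proposal is correct and follows essentially the same route as the paper: invoke Theorem~\ref{t:main} for the population-level bound, then pass to the empirical measures $\hat\mu,\hat\nu$ via Hoeffding/Chernoff plus an $\epsilon$-net over the discriminator class and union-bound, combining everything with the triangle inequality. The only (harmless) difference is that the paper simply fixes the generator $G$ furnished by Theorem~\ref{t:main} independently of $S$ and then samples $S,T$ fresh---so no dependency between $G$ and $S$ needs to be tracked---whereas you opt to build $G$ out of $m$ points of $S$; your bookkeeping of the resulting dependency is fine, but it is an unnecessary complication, and your $\epsilon/3$ split is in fact more careful than the paper's $\epsilon/2$ split.
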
 
\begin{proof}
As is noted in Theorem B.2 in \cite{arora2017generalization}, we can build a $\frac{\epsilon}{L L_{\phi}}$-net for the discriminators with a size bounded by $e^{p \log (L L_{\phi} p/\epsilon)}$. By Chernoff and union bounding over the points in the $\frac{\epsilon}{L L_{\phi}}$-net, with probability at least $1-\exp(-\Omega(p))$ over the choice of a training set $S$, we have
$$|\E_{x \sim \mu} \phi(D(x, E(x))) - \E_{x \sim \hat{\mu}} \phi(D(x, E(x))) | \leq \frac{\epsilon}{2}$$ 
for all discriminators $D$ with capacity at most $p$.  
Similarly, with probability at least $1-\exp(-\Omega(p))$ over the choice of a noise set $T$, 
$$|\E_{z \sim \nu} \phi(D(G(z), z)) - \E_{z \sim \hat{\nu}} \phi(D(G(z), z))| \leq \frac{\epsilon}{2}$$ 
Union bounding over these two events, we get the statement of the Corollary. 

\end{proof}

Spelling out the distribution over generators more explicitly, we will in fact show:   

\begin{theorem} [Main, more detailed] Let $G$ follow the distribution over generators defined in Section~\ref{s:ALI}. With probability $1 - \exp(-\Omega(p \log (\Delta/\epsilon)))$ over the choice of $G$, for all discriminators $D$ that are L-Lipschitz and have capacity bounded by $p$: \\ 
$$|\E_{x \sim \mu} \phi(D(x, E(x))) - \E_{z \sim \nu} \phi(D(G(z), z))| \leq \epsilon$$
%and \\
%(ii) $|\E_{x \sim \hat{\mu}} \phi(D(x, G(x))) - \E_{x \sim \mu} \phi(D(x, G(x))) | \leq \epsilon,  |\E_{x \sim N(0,I)} \phi(D(E(z), z)) - \E_{z \sim N(0,I)} \phi(D(E(z), z)) | $ \\ 

\label{t:maindetailed} 
\end{theorem}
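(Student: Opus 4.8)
The plan is to follow the reductions in the proof sketch and then run a bounded-differences concentration argument together with an $\epsilon$-net over discriminators, with the randomness of $G$ entering only at the concentration step.

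The first and conceptually central step is the exact identity $\E_{x\sim\mu}\phi(D(x,E(x))) = \E_G\,\E_{z\sim\nu}\phi(D(G(z),z))$, valid for every $D$. Indeed, since $E(\tilde x\circledast z)=z$, a sample $x\sim\mu$ contributes $\phi(D(\tilde x\circledast z,z))$ with $\tilde x\sim\tilde\mu$, $z\sim\nu$ independent; on the other side, for each fixed $z$ the block index $\mbox{ind}(z)$ is fixed and $x^*_{\mbox{ind}(z)}\sim\mu$, so writing $x^*_{\mbox{ind}(z)}=\tilde x^*\circledast z^*$ and noting that $\circledast z$ overwrites precisely the noise coordinates gives $G(z)=\tilde x^*\circledast z$ with $\tilde x^*\sim\tilde\mu$; integrating over $z\sim\nu$ matches the two sides. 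Thus it suffices to show that $A(G):=\E_{z\sim\nu}\phi(D(G(z),z))$ concentrates around its mean $\E_G A(G)$, simultaneously for all admissible $D$ — and no explicit small encoder for the true data distribution is ever needed.

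Next I would rewrite $A(G)$ as an average over non-colliding sets. Since $\mbox{Domain}(\nu)$ is split into $m$ blocks of equal $\nu$-measure and $z\mapsto\mbox{ind}(z)$ (hence $G$) is constant on each block, $A(G)=\E_{T\sim\mathcal{T}_{nc}}\E_{z\sim T}\phi(D(G(z),z))$, where $T=\{z_1,\dots,z_m\}$ with $z_i$ drawn from $\nu$ restricted to block $i$. For $z_i$ in block $i$ one has $G(z_i)=x^*_i\circledast z_i$, so $\E_{z\sim T}\phi(D(G(z),z))=\frac1m\sum_{i=1}^m\phi\big(D(x^*_i\circledast z_i,z_i)\big)=:f(x^*_1,z_1,\dots,x^*_m,z_m)$, a function of $2m$ mutually independent random variables. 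Altering one $x^*_i$ or one $z_i$ changes a single summand, hence moves $f$ by at most $2\Delta/m$ since $|\phi|\le\Delta$. McDiarmid's bounded-differences inequality then gives $\Pr_{G,T}\big[|f-\E f|>\epsilon/4\big]\le 2\exp(-\Omega(\epsilon^2 m/\Delta^2))=\exp(-\Omega(p\log^2(p\Delta L L_\phi/\epsilon)))=:\delta$. I would decouple the $G$- and $T$-randomness by Markov: all but a $\sqrt\delta$-fraction of $G$ have the property that all but a $\sqrt\delta$-fraction of $T$ are ``good'', and for such $G$ the bad $T$ contribute at most $2\Delta\sqrt\delta$, so $|A(G)-\E_G A(G)|\le \epsilon/4+2\Delta\sqrt\delta\le\epsilon/2$. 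Combining this with the identity above (which says $\E_G A(G)=\E_{x\sim\mu}\phi(D(x,E(x)))$) proves the bound for a fixed $D$.

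Finally, to make the bound hold for every $D$, I would (as in the proof of Corollary~\ref{c:main}, following \cite{arora2017generalization}) take an $\frac{\epsilon}{2LL_\phi}$-net over the $L$-Lipschitz, $\le p$-parameter discriminators, of size at most $e^{p\log(L L_\phi p/\epsilon)}$, and union-bound the failure event over the net; since the $\log^2$ factor in $\delta$ dominates the $\log$ in the net size, the surviving failure probability is still $\exp(-\Omega(p\log(\Delta/\epsilon)))$. For an arbitrary $D$, its nearest net point $D'$ satisfies $|\phi(D(\cdot))-\phi(D'(\cdot))|\le L_\phi L\cdot\frac{\epsilon}{2LL_\phi}=\epsilon/2$ on both arguments, transferring the bound from $D'$ to $D$ at the cost of another $\epsilon/2$; the two $\epsilon/2$ terms combine to $\epsilon$. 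The main obstacle is really the very first step — recognizing that the randomized decoder's expectation reproduces the true encoder objective exactly — after which the rest is careful bookkeeping: getting the McDiarmid constants right, correctly peeling apart the two layers of randomness, and verifying that the $\log^2$ in $m$ is large enough to absorb the discriminator union bound.
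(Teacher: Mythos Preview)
Your proposal is correct and follows essentially the same route as the paper: the exact identity $\E_{x\sim\mu}\phi(D(x,E(x)))=\E_G\E_{z\sim\nu}\phi(D(G(z),z))$, the rewriting of $\E_{z\sim\nu}$ as $\E_{T\sim\mathcal{T}_{nc}}\E_{z\sim T}$, McDiarmid on the $2m$ independent variables $(x^*_i,z_i)$, an $\epsilon$-net over discriminators, and Markov to peel off the $G$-randomness from the $(G,T)$-randomness. The only cosmetic difference is that the paper applies the union bound over the discriminator net \emph{before} the Markov step (so that the ``good $T$'' set is uniform in $D$), whereas you do Markov first for a fixed $D$ and union-bound afterward; both orders go through with the same arithmetic. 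One tiny bookkeeping slip: transferring from a net point $D'$ to an arbitrary $D$ costs $\epsilon/2$ on \emph{each} of the two expectations, so the triangle inequality gives $3\epsilon/2$ rather than $\epsilon$; this is fixed by taking a slightly finer net (or a smaller McDiarmid threshold) and does not affect the argument.
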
 

\subsection{Proof of the main claim} 

Let us recall we call a set $T$ of samples from $\nu$ \emph{non-colliding} if no two lie in the same block and we denoted $\mathcal{T}_{nc}$ to be the distribution over non-colliding sets $\{z_1, z_2, \dots, z_m\}$, s.t. each $z_i$ is sampled independently from the conditional distribution of $\nu$ inside the i-th block of the partition.

First, notice the following Lemma: 
\begin{lemma}[Reducing to expectations over non-colliding sets] Let $G$ be a fixed generator, and $D$ a fixed discriminator. Then,  
$$ \E_{z \sim \nu} \phi(D(G(z), z)) = \E_{T \sim \mathcal{T}_{nc}} \E_{z \sim T} \phi(D(G(z), z)) $$
\label{l:approximategood} 
\end{lemma}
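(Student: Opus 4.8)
The plan is to unfold both sides by the law of total expectation over the blocks of the partition and to match them term by term. Write $B_1,\dots,B_m$ for the $m$ blocks of $\mathrm{Domain}(\nu)=\R^{\tilde d}$, each of $\nu$-measure exactly $1/m$ by construction, and recall that for every $z\in B_i$ we have $\mathrm{ind}(z)=i$ and hence $G(z)=x^*_i\circledast z$. Here $G$ is fixed, so the $x^*_i$ are fixed vectors and no extra randomness enters.

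First I would rewrite the left-hand side by conditioning $z\sim\nu$ on the block it lands in:
\[
\E_{z\sim\nu}\phi(D(G(z),z)) \;=\; \sum_{i=1}^m \Pr_{z\sim\nu}[z\in B_i]\;\E_{z\sim\nu|_{B_i}}\phi(D(G(z),z)) \;=\; \frac1m\sum_{i=1}^m \E_{z\sim\nu|_{B_i}}\phi(D(x^*_i\circledast z,\,z)),
\]
where the second equality uses that all blocks have equal $\nu$-measure $1/m$ and that $G(z)=x^*_i\circledast z$ holds pointwise on $B_i$.

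Next I would expand the right-hand side directly from the definition of $\mathcal{T}_{nc}$: a draw $T=\{z_1,\dots,z_m\}\sim\mathcal{T}_{nc}$ has each $z_i$ drawn independently from $\nu|_{B_i}$, and $\E_{z\sim T}$ is the empirical average over the $m$ points of $T$. By linearity of expectation, and again using $G(z_i)=x^*_i\circledast z_i$ (valid since $z_i\in B_i$),
\[
\E_{T\sim\mathcal{T}_{nc}}\E_{z\sim T}\phi(D(G(z),z)) \;=\; \E_{T\sim\mathcal{T}_{nc}}\frac1m\sum_{i=1}^m\phi(D(G(z_i),z_i)) \;=\; \frac1m\sum_{i=1}^m \E_{z_i\sim\nu|_{B_i}}\phi(D(x^*_i\circledast z_i,\,z_i)).
\]
The two displays agree term by term, which proves the lemma.

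There is no real obstacle here; the only points to be careful about are (i) that the equal-measure partition exists and the conditional laws $\nu|_{B_i}$ are well-defined, which holds because $\nu$ is a nonatomic (spherical Gaussian) measure on $\R^{\tilde d}$, and (ii) keeping straight that the coordinate-$i$ marginal of $\mathcal{T}_{nc}$ is exactly $\nu$ restricted to $B_i$, so that it matches the $i$-th term of the total-expectation decomposition of $\nu$ over the same blocks. The entire content of the lemma is this bookkeeping, and it is precisely the equal-measure property of the partition that makes it go through.
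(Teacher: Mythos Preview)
Your proof is correct and follows essentially the same approach as the paper: both arguments expand the two expectations over the $m$ equal-measure blocks and match them term by term, using that the $i$-th marginal of $\mathcal{T}_{nc}$ is $\nu|_{B_i}$ and that each block has $\nu$-measure $1/m$. The only cosmetic difference is that you expand both sides separately and then compare, whereas the paper starts from the right-hand side and rewrites it directly into the left-hand side.
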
 
\begin{proof}
By definition, 
$$\E_{T \sim \mathcal{T}_{nc}} \E_{z \sim T} \phi(D(G(z), z)) = \frac{1}{m} \sum_{i=1}^m \E_{z_i \sim (\mathcal{T}_{nc})_i} \phi(D(G(z_i), z_i))$$ 
where $(\mathcal{T}_{nc})_i$ is the conditional distribution of $\nu$ in the $i$-th block of the partition. However, 
since the blocks form an equipartitioning, we have 
\begin{align*} \frac{1}{m} \sum_{i=1}^m \E_{z_i \sim (\mathcal{T}_{nc})_i} \phi(D(G(z_i), z_i)) &= \sum_{i=1}^m \Pr_{z \sim \nu}(z \mbox{ belongs to cell } i) \E_{z_i \sim (\mathcal{T}_{nc})_i} \phi(D(G(z_i), z_i)) \\
&= \E_{z \sim \nu} \phi(D(G(z), z)) \end{align*}

%$\mathcal{T}_{nc})_i$ and $\nu$ are equal in distribution, from which the claim follows. 
\end{proof} 

\begin{lemma}[Concentration of good generators] 
With probability $1 - \exp(-\Omega(p \log (\Delta/\epsilon)))$ over the choice of $G$, 
$$|\E_{T \sim \mathcal{T}_{nc}}\E_{z \sim T} \phi(D(G(z), z)) - \E_{G} \E_{T \sim \mathcal{T}_{nc}}\E_{z \sim T} \phi(D(G(z), z))| \leq \epsilon$$
for all discriminators $D$ of capacity at most $p$. 
\label{l:generatorconcentration} 
\end{lemma}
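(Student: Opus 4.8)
The plan is to fix the discriminator first, establish concentration for it, and then pay for the universal quantifier over $D$ with an $\epsilon$-net and a union bound, exactly as in the proof of Corollary~\ref{c:main}. So fix an $L$-Lipschitz discriminator $D$ of capacity at most $p$, and write a sample generator $G$ through its defining images $x^*_1,\dots,x^*_m$. The first step is the observation that, because every block of the partition has $\nu$-measure $1/m$, unwinding the definition of $\mathcal{T}_{nc}$ turns the quantity of interest into a clean average of independent terms:
$$\E_{T\sim\mathcal{T}_{nc}}\E_{z\sim T}\phi(D(G(z),z))=\frac1m\sum_{i=1}^m \E_{z_i\sim(\mathcal{T}_{nc})_i}\phi\big(D(x^*_i\circledast z_i,\,z_i)\big),$$
so, as a function of $G$, this is $\frac1m\sum_i g_i(x^*_i)$ with the $x^*_i$ i.i.d.\ and each $g_i$ valued in $[-\Delta,\Delta]$. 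Then I would invoke McDiarmid's bounded-difference inequality: replacing one $x^*_i$ moves the average by at most $2\Delta/m$, so with $F(G):=\E_{T\sim\mathcal{T}_{nc}}\E_{z\sim T}\phi(D(G(z),z))$ one gets $\Pr_G[|F(G)-\E_G F(G)|>\epsilon/2]\le 2\exp(-\epsilon^2 m/(8\Delta^2))$. (Alternatively, mirroring the proof sketch, one keeps the $z_i$ random and applies McDiarmid in the $2m$ variables $x^*_i,z_i$ to $f=\frac1m\sum_i\phi(D(x^*_i\circledast z_i,z_i))$ to get joint concentration of $f$ about $\E_G F$, then converts this to a per-$G$ statement by a two-level Markov argument: all but an $\exp(-\Omega(\cdot))$ fraction of $G$ have the deviation small for all but an $\exp(-\Omega(\cdot))$ fraction of non-colliding sets $T$, and the atypical $T$'s contribute only $O(\Delta)\exp(-\Omega(\cdot))$ to $|\E_T f-\E_G F|$; this gives the same bound up to constants.)

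Next I would plug in $m=p\Delta^2\log^2(p\Delta L L_\phi/\epsilon)/\epsilon^2$, which makes the exponent $\Omega(p\log^2(p\Delta LL_\phi/\epsilon))$, and take the $\frac{\epsilon}{LL_\phi}$-net $\mathcal N$ over capacity-$p$ discriminators from Theorem B.2 of \cite{arora2017generalization}, whose size is at most $e^{p\log(LL_\phi p/\epsilon)}$. Union-bounding the failure event over $\mathcal N$ leaves probability $\exp(p\log(LL_\phi p/\epsilon)-\Omega(p\log^2(p\Delta LL_\phi/\epsilon)))=\exp(-\Omega(p\log(\Delta/\epsilon)))$, since the squared logarithm dominates. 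Finally, to handle a discriminator $D$ outside the net I would use Lipschitzness: picking $D_0\in\mathcal N$ with parameters within $\frac{\epsilon}{LL_\phi}$ of $D$, we get $|\phi(D(\cdot))-\phi(D_0(\cdot))|\le L_\phi\cdot L\cdot\frac{\epsilon}{LL_\phi}=\epsilon$ pointwise, hence both $F(G)$ and $\E_G F(G)$ shift by at most $\epsilon$ in passing from $D$ to $D_0$, and the triangle inequality transfers the estimate to $D$ with $\epsilon$ replaced by a constant multiple (harmless after rescaling). That is precisely the statement of the lemma, and combined with Lemma~\ref{l:approximategood} (to replace $\E_{T\sim\mathcal{T}_{nc}}\E_{z\sim T}$ by $\E_{z\sim\nu}$) and the identity $\E_{x\sim\mu}\phi(D(x,E(x)))=\E_G\E_{z\sim\nu}\phi(D(G(z),z))$ it yields Theorem~\ref{t:maindetailed}.

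The step I expect to be the real crux is not any individual estimate but getting the three \emph{budgets} to fit simultaneously: the McDiarmid exponent $\Omega(\epsilon^2 m/\Delta^2)$ has to beat both the log-cardinality $p\log(LL_\phi p/\epsilon)$ of the discriminator net and --- if one follows the joint-randomness route --- the $\sqrt{\cdot}$ loss in the two-level Markov step together with the $O(\Delta)$-weighted mass of atypical non-colliding sets. The particular choice of $m$ (quadratic in $\Delta/\epsilon$, linear in $p$, with the extra $\log^2$ factor exactly to swallow the net term) is what reconciles them. Everything else --- the additive decomposition, the bounded-difference constants, the Lipschitz transfer --- is routine once the equipartition/non-colliding structure of the construction from Section~\ref{s:ALI} is in hand; the genuine ingenuity sits upstream, in defining $G$ and $E$ so that this decomposition is available at all.
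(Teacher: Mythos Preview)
Your proposal is correct, and in fact your \emph{primary} route is a genuine simplification of what the paper does. The paper takes exactly the approach you relegate to the parenthetical: it treats $\E_{z\sim T}\phi(D(G(z),z))=f(x^*_1,\dots,x^*_m,z_1,\dots,z_m)$ as a function of all $2m$ independent variables, applies McDiarmid jointly to get $\Pr_{T,G}(|R_{D,T,G}|\geq\epsilon)\leq\exp(-\Omega(m\epsilon^2))$, union-bounds over the net, then uses Markov to convert the joint bound into ``for most $G$, for most $T$'', and finally splits $\E_T$ into the good-$T$ and bad-$T$ pieces to recover the per-$G$ statement. Your direct route observes that once $T$ is integrated out, $F(G)=\frac1m\sum_i g_i(x^*_i)$ is already a bounded-difference function of the $m$ i.i.d.\ images alone, so McDiarmid applies immediately with no Markov step and no integral splitting. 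This is strictly cleaner; the joint-randomness detour buys nothing here (it would matter only if one needed a statement uniform in $T$ as well, which the lemma does not). Both routes yield the same exponent after plugging in $m$, and your bookkeeping of the net size versus the McDiarmid exponent (the $\log^2$ beating the $\log$) matches the paper's.
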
 
\begin{proof}

Consider $\E_{z \sim T} \phi(D(G(z), z))$ as a random variable in $T \sim \mathcal{T}_{nc}$ and $G$ for a fixed $D$.
%By Claims (1) and (2) in Lemma~\ref{l:goodsets}, 
We can write 
$$\E_{z \sim T} \phi(D(G(z), z)) = f(x^*_i, z_i, i \in [m])$$ 
where the random variables $x^*_i$, $z_i$ are all mutually independent. 
Note that the arguments that $f$ is a function of are all independent variables, so we can apply McDiarmid's inequality. 
Towards that, let's denote by $z_{-i}$ the vector of all inputs to $f$, except for $z_i$. Notice that 
\begin{equation}|f(z_{-i}, z_i) - f(z_{-i}, \tilde{z}_i)| \leq \frac{1}{m}, \forall i \in [m]\label{eq:condmcd}\end{equation} 
(as changing $z_i$ to $\tilde{z}_i$ only affect one out of the $m$ terms in $E_{z \sim T} \phi(D(G(z), z))$). Analogously we have 
\begin{equation}|f(x^*_{-i}, x^*_i) - f(x^*_{-i}, \tilde{x}^*_i)| \leq \frac{1}{m}, \forall i \in [m]\label{eq:condmcd2}\end{equation} 
%Towards that, let's denote by $a_{-i}$ the vector of all inputs to $f$, except for $a_i$ \Anote{need to say this more precisely}. If we show 
%\begin{equation}|f(a_{-i}, a_i) - f(a_{-i}, \tilde{a}_i)| \leq \frac{3}{m}, \forall i \in [m]\label{eq:condmcd}\end{equation} 
%for any $\tilde{a}_i \neq a_i$, then it will follow that with probability $1 - \exp(-O(m\epsilon^2))$, 
%$$|f(x^*_i, z^*_i, a_i, x_i, z_i, i \in [m]) - \E_{T, G} [f(x^*_i, z^*_i, a_i, x_i, z_i, i \in [m])]| \leq \epsilon$$ 
%(NB: similar statements should be proven for $x_{-i}, x^*_{-i}$, etc. but those statements are trivial because $\phi(\cdot) \in [0,1]$, even with 1 instead of 3 on the RHS of \eqref{eq:condmcd}). Finally, the reason \eqref{eq:condmcd} holds is, writing out $f$ explicitly as an average over $m$ terms, the only terms in the sums that are affected by changing $a_i$ to $\tilde{a}_i$ are the final positions of $i$, $a_i$ and $\tilde{a}_i$ -- and each of those terms is bounded by 1 in absolute value. 
%\Anote{Expand the above somewhat.}

Denoting $R_{D,T,G} = f(x^*_i, z_i, i \in [m]) - \E_{T, G} [f(x^*_i, z_i, i \in [m])]$, by McDiarmid we get 
\begin{align*} \Pr_{T,G} (|R_{D,T,G}| \geq \epsilon) \leq \exp(-\Omega(m \epsilon^2)) \end{align*}  
Building a $\frac{\epsilon}{L L_{\phi}}$-net for the discriminators and union bounding over the points in the net, we get that $\Pr_{T,G} (\exists D, |R_{D,T,G}| \geq \epsilon/2) \leq \exp(-\Omega(p \log(\Delta/\epsilon)))$. 
On the other hand, we also have
\begin{align*} \Pr_{T,G} (\exists D,|R_{D,T,G}| \geq \epsilon/2) &= \E_{G}[\Pr_{T}(\exists D, |R_{D,T,G}| \geq \epsilon)]  \end{align*}  
so by Markov's inequality, with probability $1 - \exp(-\Omega(p \log(\Delta/\epsilon))$ over the choice of $G$, \Anote{1/2 const in O}, 
\begin{align*} \Pr_{T}(\exists D, |R_{D, T,G}| \geq \epsilon/2)] \leq \exp(-\Omega(p \log (\Delta/\epsilon))) \end{align*}  
%Union bounding over all $D$, this claim holds for all $D$ of capacity at most $p$ with probability $1-\exp(-p)$. 
%\Anote{Does this require $\epsilon^4$?}
Let us denote by $\tilde{T}(G)$ the sets $T$, s.t. $\forall D, |R_{D, T, G}| \leq \epsilon/2$. Then, with probability $1 - \exp(-\Omega(p \log (\Delta/\epsilon))))$ over the choice of $G$, we have that for all $D$ of capacity at most $p$,  
\begin{align*} &|\E_T\E_{z \sim T} \phi(D(G(z), z)) - \E_{G, T}\E_{z \sim T} \phi(D(G(z), z))|  \\
&\leq \int_{T \in \tilde{T}(G_x)} |\E_{z \sim T} \phi(D(G(z), z)) - \E_{G, T}\E_{z \sim T} \phi(D(G(z), z))| d\mathcal{T}_{nc}(T) \\
&+  \int_{T \notin \tilde{T}(G_x)} |\E_{z \sim T} \phi(D(G(z), z)) - \E_{G, T}\E_{z \sim T} \phi(D(G(z), z))| d\mathcal{T}_{nc}(T) \\
&\leq \epsilon/2 + \exp(-\Omega(p \log(\Delta/\epsilon))) \\
&\leq \epsilon/2 + O((\epsilon/\Delta)^p) \\
&\leq \epsilon \\
 \end{align*} 
which is what we want.

\end{proof} 

With these in mind, we can prove the main theorem: 

\begin{proof}[Proof of Theorem \ref{t:main}]

From Lemma \ref{l:approximategood} and Lemma \ref{l:generatorconcentration} we get that with probability $1 - \exp(-\Omega(p \log (\Delta/\epsilon)))$ over the choice of $G$, 
$$|\E_{z \sim \nu} \phi(D(G(z), z)) - \E_{G, T \sim \mathcal{T}_G} \phi(D(G(z), z))| \leq \epsilon  $$
for all discriminators $D$ of capacity at most $p$. On the other hand, it's easy to see that  
$$\E_{G, T \sim \mathcal{T}_G} \phi(D(G(z), z)) = \E_{x \sim \tilde{\mu}, z \sim \nu} \phi(D(x \circledast z, z)) $$ 
Furthermore, 
$$\E_{x \sim \mu} \phi(D(x, E(x)))  = \E_{x \sim \tilde{\mu}, z \sim \nu} \phi(D(x \circledast z, z)) $$ 
by the definition of $E(x)$, which proves the statement of the theorem. 
%Putting (1) and (2) together, we have that with probability $1 - \exp(-O(m \epsilon^2))$ over the choice of $G$,  

\end{proof}

\subsection{Representability results}
\label{s:representability}

In this section, we show that a generator $G$ of the type we described in the previous section can be implemented easily using a ReLU network. The encoder can be parametrized by the set $x^*_1, x^*_2, \dots, x^*_m$ of ``memorized'' training samples. The high-level overview of the network is rather obvious: we partition $\mathbb{R}^{\tilde{d}}$ into equal-measure blocks; subsequently, for a noise sample $z$, we determine which block it belongs to, and output the corresponding sample $x_i^*$, which is memorized in the weights of the network. 

%For this section, we will assume the domain of $\tilde{\mu}$ is $\mathbb{R}^d$ and the domain of $\nu$ is $\mathbb{R}^{\tilde{d}}$. 

\begin{theorem} Let $G$ be the generator determined by the samples $x_1^*, x_2^*, \dots,  x_m^* \in \mathbb{R}^d$, i.e. $G(z) = x^*_{\mbox{ind}(z)} \circledast z$. For any $\delta > 0$, there exists a ReLU network with $O(md)$ non-zero weights, which implements a function $\tilde{G}$, s.t.  $\|\tilde{G}(\nu) - G(\nu)\|_{TV} \leq \delta$, where $\|\cdot\|_{TV}$ denotes total variation distance.\footnote{The notation $G(\nu)$ simply denotes the distribution of $G(z)$, when $z \sim \nu$.}
\footnote{For smaller $\delta$, the weights get larger, so the bit-length to represent them gets larger. This is standard when representing step functions using ReLU, for instance see Lemma 3 in \cite{arora2017generalization}. For the purposes of our main result Theorem \ref{t:main}, it suffices to make $\delta = O(\frac{\epsilon}{\Delta})$, which translates into a weights on the order of $O(\frac{m\epsilon}{\Delta})$ -- which in turn translates into bit complexity of $O(\log (m \epsilon/\Delta))$ so this isn't an issue as well.}\label{t:representen}
\end{theorem}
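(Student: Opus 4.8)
The plan is to write down an explicit ReLU network with one hidden layer whose weights encode the memorized samples and the partition, count its non-zero weights, and then control the total-variation error that arises because a ReLU net computes a \emph{continuous} function and so cannot reproduce the discontinuous index map $\mathrm{ind}(\cdot)$ exactly. Since the main proof only uses that the blocks of the partition have equal $\nu$-measure (Lemma~\ref{l:approximategood}), I am free to take the partition to be the ``slab'' partition by the first coordinate: let $q_0 = -\infty < q_1 < \dots < q_{m-1} < q_m = +\infty$ be the $\tfrac1m, \tfrac2m, \dots, \tfrac{m-1}{m}$ quantiles of the standard one-dimensional Gaussian and let block $i$ be $\{z : q_{i-1} \le z_1 < q_i\}$; each block has mass $1/m$ and $\mathrm{ind}(z)$ depends on $z$ only through $z_1$.

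Next I would build, for a width parameter $\gamma > 0$ to be chosen, ``soft step'' units $h_j(t) = \mathrm{ReLU}\big(\tfrac{t-q_j}{\gamma} + \tfrac12\big) - \mathrm{ReLU}\big(\tfrac{t-q_j}{\gamma} - \tfrac12\big)$ for $1 \le j \le m-1$, each of which lies in $[0,1]$ and equals $\mathbf{1}[t \ge q_j]$ whenever $|t - q_j| \ge \gamma/2$; with the conventions $h_0 \equiv 1$, $h_m \equiv 0$, the combinations $b_i := h_{i-1} - h_i$ are nonnegative, sum to $1$, and satisfy $b_i(t) = \mathbf{1}[t \in [q_{i-1}, q_i)]$ whenever $t$ is at distance $> \gamma/2$ from every quantile. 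The network $\tilde G$ then has a single hidden layer consisting of the $2(m-1)$ ReLU units defining $h_1(z_1), \dots, h_{m-1}(z_1)$ (each wired only to the input coordinate $z_1$) together with $2\tilde d$ units computing $\mathrm{ReLU}(\pm z_j)$ for the $\tilde d$ noise coordinates, and a linear output layer that outputs, for a non-noised coordinate $k$, the value $\sum_{i=1}^m (x_i^*)_k\, b_i(z_1)$ (a telescoped linear combination of $(x_1^*)_k$ and the $h_j(z_1)$), and for a noised coordinate $k = j\lfloor d/\tilde d\rfloor$ the value $\mathrm{ReLU}(z_j) - \mathrm{ReLU}(-z_j) = z_j$. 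Counting edges: the hidden layer draws $O(m + \tilde d) = O(m + d)$ edges, and each of the $d$ output coordinates draws $O(m)$ edges (non-noised) or $O(1)$ edges (noised), for $O(md)$ edges in all, so $\tilde G$ has $O(md)$ non-zero weights. By construction, as soon as $z_1$ is at distance $> \gamma/2$ from every interior quantile, exactly one $b_i(z_1)$ equals $1$ (the one with $i = \mathrm{ind}(z)$) and the rest vanish, so $\tilde G(z) = x^*_{\mathrm{ind}(z)} \circledast z = G(z)$.

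Finally I would convert ``$\tilde G = G$ off a small set'' into the TV bound. The two maps agree pointwise except on $\mathcal{B} := \{z : \exists j \in [m-1],\ |z_1 - q_j| \le \gamma/2\}$, so coupling $G(\nu)$ and $\tilde G(\nu)$ through a shared draw $z \sim \nu$ gives $\|\tilde G(\nu) - G(\nu)\|_{TV} \le \Pr_{z \sim \nu}[z \in \mathcal{B}]$; since $z_1$ is a standard Gaussian (density $\le 1/\sqrt{2\pi}$), a union bound over the $m-1$ interior quantiles gives $\Pr[z \in \mathcal{B}] \le (m-1)\gamma/\sqrt{2\pi} \le m\gamma$. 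Taking $\gamma = \delta/m$ yields $\|\tilde G(\nu) - G(\nu)\|_{TV} \le \delta$, with weights of magnitude $O(1/\gamma) = O(m/\delta)$ and hence bit-length $O(\log(m/\delta))$, consistent with the footnote. The one genuinely delicate point — and the step I would be most careful about — is exactly this last one: continuity of ReLU nets forces us to give up exactness in the $\gamma$-neighborhoods of the slab boundaries, and the argument rests on those neighborhoods being $\nu$-negligible; this is the standard ``step function via ReLU'' device (cf.\ Lemma~3 of \cite{arora2017generalization}), and everything else is routine bookkeeping of the weight count.
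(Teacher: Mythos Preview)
Your proof is correct and, in fact, cleaner than the paper's. The key difference is the choice of partition: the paper assumes $m = k^{\tilde d}$ and builds a \emph{product} partition over all $\tilde d$ coordinates using the thresholds of $|z_j|$, then routes through a ``selector'' circuit per coordinate, a $k$-ary-to-decimal converter, and finally a one-hot layer $(B_1,\dots,B_m)$ before reading off the memorized sample. You instead observe that the only property used upstream (Lemma~\ref{l:approximategood}) is that the blocks have equal $\nu$-measure, and so you may take the one-dimensional slab partition by the quantiles of $z_1$; this lets you skip the product structure, the $m=k^{\tilde d}$ assumption (which the paper itself flags as potentially inflating $m$), and the index-conversion machinery, replacing them with $m-1$ soft steps and a telescoped linear readout. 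The weight count and the TV argument (agreement off the $\gamma$-neighborhoods of the quantiles, then a union bound using the bounded Gaussian density) are the same in spirit in both proofs and both land at $O(md)$ weights with magnitudes $O(m/\delta)$; your route simply has fewer moving parts.
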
   
\begin{proof} 

The construction of the network is pictorially depicted on Figure~\ref{f:circuit}. We expand on each of the individual parts of the network. 

First, we show how to implement the partitioning into blocks. The easiest way to do this is to use the fact that the coordinates of a spherical Gaussian are independent, and simply partition each dimension separately into equimeasure blocks, depending on the value of $|z_i|$: the absolute value of the $i$-th coordinate. Concretely, 
without loss of generality, let's assume $m = k^{\tilde{d}}$ \Anote{This might blow up $m$ by factor of $\tilde{d}$. Can this be avoided?}, for some $k \in \mathbb{N}$. Let us denote by $\tau_i \in \mathbb{R}: i \in [k]$, $\tau_0 = 0$ the real-numbers, s.t. $\Pr_{z_i \sim \nu_i}[|z_i| \in (\tau_{i-1}, \tau_i)] = \frac{1}{k}$. \Anote{i.e. they partition the univariate Gaussian measure into $k$ pieces}.  
%$\frac{2\pi}{m} i$. 
We will associate to each $d$-tuple $(i_1, i_2, \dots, i_d) \in [k]^d$ the cell 
$\{z \in \mathbb{R}^{\tilde{d}}: |z_j| \in (\tau_{i_j-1}, \tau_{i_j}), \forall j \in [\tilde{d}]\} $.  
These blocks clearly equipartition $\mathbb{R}^{\tilde{d}}$ with respect to the Gaussian measure. 

Inferring the $\tilde{d}$-tuple after calculating the absolute values $|z_j|$ (which is trivially representable as a ReLU network as $\max(0,z_j) + \max(0,-z_j)$) can be done using the ``selector'' circuit introduced in `\cite{arora2017generalization}. Namely, by Lemma 3 there, there exists a ReLU network with $O(k)$ non-zero weights that takes $|z_j|$ as input and outputs $k$ numbers $(b_{j,1}, b_{j,2}, \dots, b_{j,k})$, s.t. $b_{j,i_j} = 1$ and $b_{j,l} = 0, j \neq i_j$ with probability $1-\delta$ over the randomness of $z_j, j \in [\tilde{d}]$.  

%\Anote{Needs a slack at the end of the intervals.}   
Since we care about $G(\nu)$ and $\tilde{G}(\nu)$ being close in total variation distance only, we can focus on the case where all $z_j$ are such that $b_{j,i_j} = 1$ and $b_{j,l} = 0, j \neq i_j$ for some indices $i_j$. 

We wish to now ``turn on'' the memorized weights for the corresponding block in the partition. To do this, we first pass the calculated $\tilde{d}$-tuple through a network which interprets it as a number in $k$-ary and calculates it's equivalent decimal representation. This is easily implementable as a ReLU network with $O(\tilde{d} k)$ weights calculating $L = \sum_{j=1}^{\tilde{d}} k^j \sum_{l=1}^k b_{j,l} i_j $. Then, we use use a simple circuit with $O(m)$ non-zero weights to output $m$ numbers $(B_1, B_2, \dots, B_m)$, s.t. $B_L=1$ and $B_i = 0, i \neq L$ (implemented in the obvious way). The subnetwork of $B_i$ will be responsiple for the $i$-th memorized sample. 

Namely, we attach to each coordinate $B_i$ a curcuit with fan-out of degree $d$, s.t. the weight of edge $j \in [d]$ is $x^*_{i,j}$. Let's denote these outputs as $F_{i,j}, i \in [m], j \in [d]$ and let $\tilde{F}_i: i \in [d]$ be defined as $\tilde{F}_i = \sum_{j=1}^m F_{i,j}$. It's easy to see since $B_i = 0, i \neq L$ that $\tilde{F}_i = x^*_{L,i}$.     

Finally, the operation $\circledast$ can be trivially implemented using additional $d$ weights: we simply connect each output $i$ either to 
$z_{\frac{i}{\lfloor \frac{d}{\tilde{d}}\rfloor}}$ if $i \equiv 0 (\mbox{ mod } \lfloor \frac{d}{\tilde{d}} \rfloor)$ or to $\tilde{F}_{i}$ otherwise. 

Adding up the sizes of the individual components, we see the total number of non-zero weights is $O(md)$, as we wanted. 
 
\end{proof}

\begin{figure}[H]

\label{f:circuit}
\centering
\includegraphics[width=0.7\textwidth]{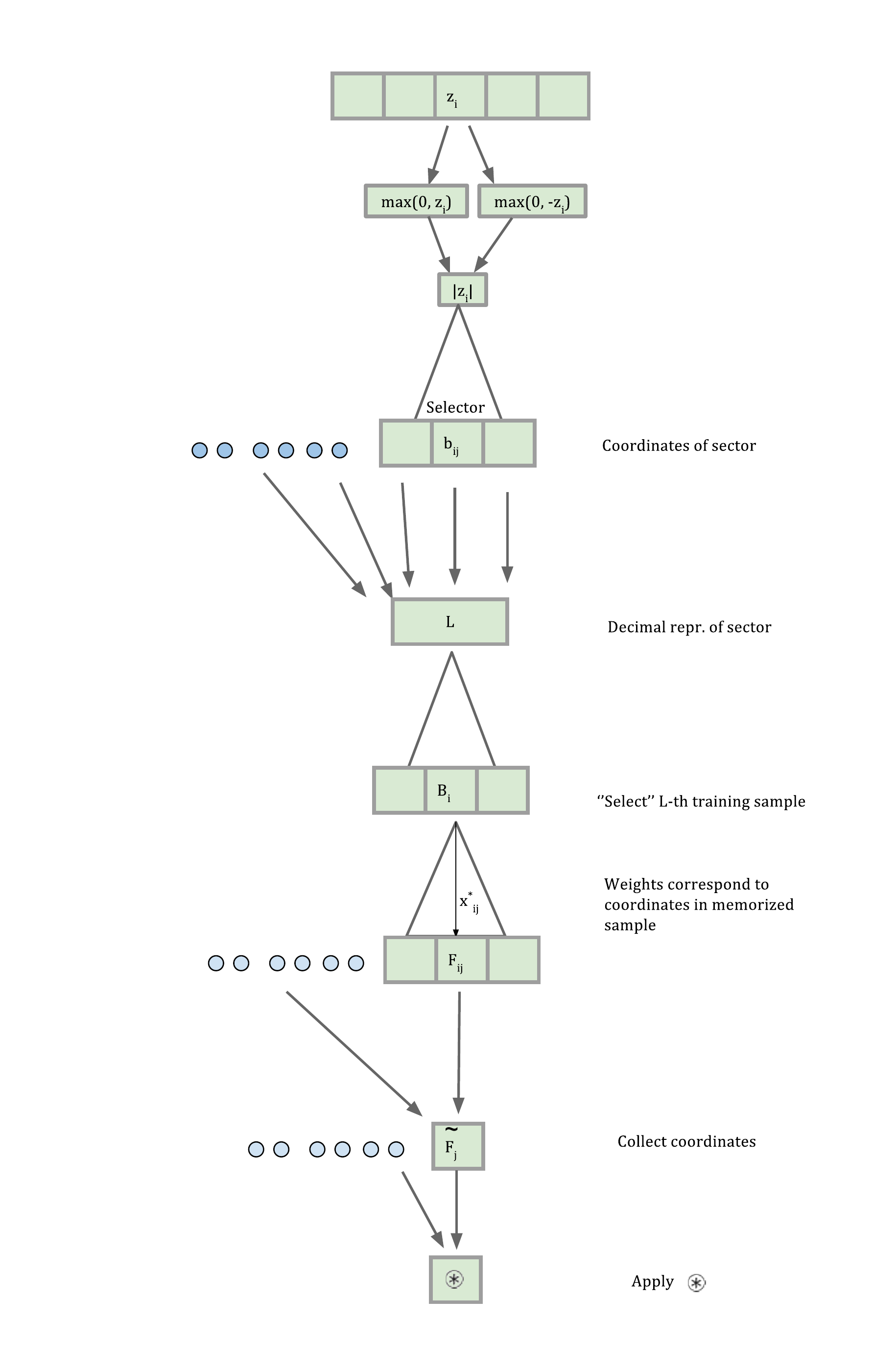}
\caption{A ReLU implementation of a memorizing generator}
\end{figure}

\end{document}